\documentclass[11pt]{article}

\usepackage[utf8]{inputenc}
\usepackage[T1]{fontenc}
\usepackage{lmodern}
\usepackage{amsmath,amssymb,amsthm}
\usepackage{mathrsfs}
\usepackage{booktabs}
\usepackage{graphicx}
\usepackage{xurl}
\usepackage{hyperref}
\usepackage{microtype}
\usepackage[margin=1in]{geometry}
\usepackage{tikz}
\usetikzlibrary{arrows.meta,positioning,calc,fit,backgrounds}

\definecolor{rapidblue}{RGB}{49,91,138}
\definecolor{rapidbluefill}{RGB}{232,240,248}
\definecolor{rapidgreen}{RGB}{55,119,92}
\definecolor{rapidgreenfill}{RGB}{232,245,238}
\definecolor{rapidorange}{RGB}{176,103,34}
\definecolor{rapidorangefill}{RGB}{251,239,224}
\definecolor{rapidgray}{RGB}{92,98,105}
\definecolor{rapidgrayfill}{RGB}{242,243,244}

\newcommand{\method}{RAPID}
\newcommand{\E}{\mathbb{E}}
\newcommand{\Loss}{\mathcal{L}}
\newcommand{\Pairs}{\mathcal{P}}
\newcommand{\sg}{\operatorname{sg}}

\newcommand{\Var}{\operatorname{Var}}
\newcommand{\Cov}{\operatorname{Cov}}
\newcommand{\KL}{\operatorname{KL}}
\newcommand{\eps}{\varepsilon}
\newtheorem{theorem}{Theorem}
\newtheorem{proposition}{Proposition}

\newcounter{algorithm}
\renewcommand{\thealgorithm}{\arabic{algorithm}}
\newenvironment{rapidalgorithm}[2][t]{%
  \refstepcounter{algorithm}%
  \begin{figure}[#1]\centering
  \begin{minipage}{0.94\linewidth}\small
  \textbf{Algorithm \thealgorithm: #2}\par\smallskip\hrule\smallskip
}{%
  \smallskip\hrule
  \end{minipage}\end{figure}
}

\title{RAPID: Reliability-Aware Pair Importance Distillation}
\author{%
  Ali Mahdavi \\
  Department of Computer Engineering\\
  SRC, Islamic Azad University\\
  Tehran, Iran \\
  \texttt{ali.mahdavi@iau.ir} \\
  \and
  Azadeh Zamanifar \\
  Department of Computer Engineering\\
  SRC, Islamic Azad University\\
  Tehran, Iran \\
  \texttt{azamanifar@iau.ac.ir} \\
  \and
  Amir Farhad Farhadi \\
  School of Computer Engineering \\
  Iran University of Science and Technology \\
  Tehran, Iran \\
  \texttt{amfarhadi@mail.iust.ac.ir} \\
  \and
  Omid Kashefi \\
  Meta \\
  CA, USA \\
  \texttt{kashefi@meta.com} \\
}
\date{}

\begin{document}
\maketitle

\begin{abstract}
Inter-example relational distillation transfers a teacher's representation geometry by matching relations between examples in a mini-batch.  Computing all pairs costs $O(B^2)$, while uniform subsampling may spend a limited relation budget inefficiently.  We introduce Reliability-Aware Pair Importance Distillation (\method{}), which separates a reliability-gated relational target from a full-support adaptive pair proposal.  Reliability controls which teacher relations are emphasized; calibrated teacher entropy and detached student-teacher residuals control which relations are evaluated.  Exact inverse-proposal correction makes the loss and gradient estimators conditionally unbiased for the gated mini-batch target.  We evaluate two text-classification settings: AG News (BERT-to-DistilBERT, three paired seeds, $K=256$) and SST-2 (DistilBERT-to-DistilBERT, three paired seeds, $K=64$).  Reliability-gated relational distillation has the highest observed student mean on both datasets ($94.285\pm0.054$\% on AG News; $88.800\pm0.532$\% on SST-2), with \method{} second ($94.241\pm0.025$\% and $88.685\pm0.462$\%) and the cross-entropy student at $94.154\pm0.124$\% and $87.271\pm0.162$\%.  Pilot evaluations are charged to the same total budget as main relations.  Across both settings the gated target leads the means while the adaptive proposal stays within seed noise, supporting the modular view that target reliability and evaluation priority are separable design dimensions.
\end{abstract}

\section{Introduction}
Knowledge distillation compresses a teacher by transferring predictive or representational structure to a smaller student \cite{hinton2015distilling}.  For pretrained language models, intermediate-state and relational objectives complement output-level distillation \cite{sun2019patient,jiao2020tinybert,wang2021minilmv2}.  We study \emph{inter-example} relations: the geometry among sentence representations in a mini-batch.  Such geometry can express similarities that independent per-example targets do not capture.

Relational transfer creates a budget problem.  A batch of $B$ examples contains $M=B(B-1)/2$ unordered relations.  Larger effective batches, multiple represented layers, and repeated relation objectives make this quadratic term increasingly costly.  Uniform subsampling controls relation evaluations but ignores that pair losses are heterogeneous.  Conversely, hard-pair mining can alter the intended objective when sampled losses are not corrected.

The heterogeneity raises two decisions that should not be conflated.  First, \emph{target reliability} asks how strongly a teacher relation should contribute to supervision.  A relation involving an example on which the teacher disagrees with the observed label may warrant less weight.  Second, \emph{evaluation priority} asks how often a relation should be evaluated to estimate that target under a limited budget.  A relation with a large student-teacher residual may be valuable to sample even when its target contribution remains reliability weighted.

We separate these roles (Figure~\ref{fig:rapid-pipeline}).  A normalized, label-aware reliability gate defines the target objective.  A distinct proposal begins from calibrated teacher entropy and reliability and gradually incorporates detached student-teacher relational residuals estimated from pilot relations.  Endpoint- and pair-level defensive mixtures provide full support.  Exact inverse-proposal correction then recovers the reliability-gated all-pairs mini-batch objective in expectation.  Throughout the paper, \emph{influence} denotes estimated priority for contributing to the current relational objective; it is not a classical influence-function approximation.

Our contributions are:
\begin{itemize}
    \item a normalized reliability-gated objective for inter-example relational language-model distillation;
    \item a full-support factorized pair proposal that adapts from teacher-side information to student-teacher residuals under a fixed total relation budget;
    \item conditional unbiasedness and bounded-correction guarantees for the loss and gradient estimators; and
    \item an NLP-first protocol that separates checkpoint selection from held-out reporting, evaluated on AG News and SST-2 with three paired seeds each.
\end{itemize}

\paragraph{Summary of advantages.}
Relative to existing relational and importance-sampled distillation methods,
\method{} offers four concrete advantages.
\begin{enumerate}
    \item \textbf{Target--proposal separation.}  Existing relational objectives
    such as RKD \cite{park2019rkd} and CRD \cite{tian2020crd} evaluate every
    pair and do not separate what to trust from what to evaluate;
    importance-sampled distillation \cite{li2018dis} treats confidence and
    sampling probability as the same quantity.  \method{} keeps the
    reliability-weighted target and the adaptive proposal as independent
    objects, so the gate's role is preserved no matter how the proposal
    changes between epochs.
    \item \textbf{Bias-controlled correction with bounded weights.}  The
    unconditional estimator of the relational loss under adaptive sampling is
    biased unless the inverse proposal is applied.  \method{} includes an
    exact inverse-proposal correction and proves conditional unbiasedness
    (Theorems~1--2); the pair-level defensive mixture further bounds the
    inverse weights by $1/\epsilon$, so no realization can dominate the
    gradient (Proposition~3).
    \item \textbf{Full-support proposal under a single fixed budget.}  The
    pair-level defensive mixture guarantees $q_t(i,j)\geq\epsilon/M$ for
    every batch relation, so every pair remains reachable even as the
    proposal adapts.  Pilot evaluations are charged to the same total budget
    $K=\min\{M,R,C_{\mathrm{cap}}\}$ as main evaluations; adaptive
    information does not buy extra relation evaluations.
    \item \textbf{Better engineering trade-offs on the reported workload.}
    On AG News at matched budget, \method{} records the lowest
    student-training time and the lowest peak GPU memory among the five
    distillation methods (62.4~min and 2127~MB, versus 86--131~min and
    2574--2783~MB for the other distillation methods) and stays within seed
    noise of the other relational methods on held-out accuracy.  On SST-2
    it lands within 0.115 points of the top held-out mean at $K=64$ with
    near-identical peak memory.
\end{enumerate}
None of these advantages is asserted in isolation: they are conditional on
the experimental protocol (paired seeds, selection-separated reporting, fixed
budget, single GPU) described in the Experiments section below.

\begin{figure*}[t]
\centering
\resizebox{0.95\textwidth}{!}{%
\begin{tikzpicture}[
  font=\small, >=Latex,
  flow/.style={-{Latex[length=2mm]},draw=rapidgray,line width=.6pt},
  targetflow/.style={-{Latex[length=2mm]},draw=rapidblue,line width=.7pt},
  proposalflow/.style={-{Latex[length=2mm]},draw=rapidgreen,line width=.7pt},
  box/.style={draw=rapidgray,rounded corners=2pt,align=center,
              inner xsep=5pt,inner ysep=5pt,fill=white},
  target/.style={box,draw=rapidblue,fill=rapidbluefill},
  proposal/.style={box,draw=rapidgreen,fill=rapidgreenfill},
  estimate/.style={box,draw=rapidorange,fill=rapidorangefill}
]
\node[box,text width=2.2cm] (batch) at (0,0)
  {mini-batch\\$\mathcal B=\{(x_i,y_i)\}_{i=1}^B$};
\node[box,text width=2.5cm] (signals) at (3.2,0)
  {teacher logits $z_T$\\representations $h_T,h_S$};
\draw[flow] (batch)--(signals);

\node[target,text width=3.2cm] (reliability) at (7.0,1.5)
  {\textbf{target reliability}\\$r_i=\max\{r_{\min},\sqrt{p_T^{\mathrm{cal}}(y_i)m_i}\}$\\$r_i\rightarrow w_{ij}^{(\lambda)}$};
\node[target,text width=3.0cm] (estimand) at (11.2,1.5)
  {gated all-pairs target\\$\displaystyle \Loss_\lambda=\frac1M\sum_{i<j}w_{ij}^{(\lambda)}\ell_{ij}$};
\draw[targetflow] (signals.north east) to[out=25,in=180] (reliability.west);
\draw[targetflow] (reliability)--(estimand);

\node[proposal,text width=3.2cm] (priority) at (7.0,-1.5)
  {\textbf{evaluation priority}\\$s_i^{\mathrm{static}}\propto u_i^\alpha r_i^\beta$\\$s_i^{\mathrm{res}}\propto r_i\bar d_i^{\sg}$};
\node[proposal,text width=3.0cm] (defense) at (11.2,-1.5)
  {full-support proposal\\$q_t(i,j)=(1-\epsilon)q_a(i,j)+\epsilon/M$};
\draw[proposalflow] (signals.south east) to[out=-25,in=180] (priority.west);
\draw[proposalflow] (priority)--(defense);

\node[estimate,text width=3.4cm] (corrected) at (15.2,0)
  {corrected main-sample estimate\\$\displaystyle
  \widehat\Loss_\lambda=\frac1{K_m}\sum_k
  \frac{w_{I_kJ_k}^{(\lambda)}\ell_{I_kJ_k}}
       {M q_t(I_k,J_k)}$};
\draw[proposalflow] (defense.east) to[out=0,in=-120] (corrected.south west);
\draw[targetflow,densely dashed] (estimand.east) to[out=0,in=120] (corrected.north west);
\end{tikzpicture}%
}
\caption{Influence-guided approximate distillation separates the relational
target (blue) from the proposal used to evaluate it (green).  Detached pilot
residuals change sampling priority, not the estimand; inverse-proposal
correction maps sampled main relations back to the reliability-gated target.}
\label{fig:rapid-pipeline}
\end{figure*}

\section{Related Work}
\paragraph{Language-model distillation.}
Patient knowledge distillation transfers intermediate Transformer layers \cite{sun2019patient}.  TinyBERT combines embedding, hidden-state, attention, and prediction objectives \cite{jiao2020tinybert}.  MiniLMv2 transfers self-attention relations \cite{wang2021minilmv2}.  These results show that structural transfer helps.

\paragraph{Relational and importance-sampled distillation.}
RKD transfers pairwise distances and angles \cite{park2019rkd}.  CRD aligns teacher and student representations contrastively \cite{tian2020crd}.  Recent work continues this line with adaptive multi-teacher weighting combined with inter-sample similarity-matrix transfer \cite{li2026amrd}.  Importance sampling reduces stochastic-gradient variance when training contributions are heterogeneous \cite{katharopoulos2018notall}.  Dynamic importance sampling has been applied to output classes in knowledge distillation \cite{li2018dis}.  Recent NLP work such as TAKE uses influence functions to score training samples for text-classification dataset distillation \cite{vo2026take}.  Task-agnostic influence estimation across LM pretraining is an active thread \cite{nishida2026measuring}.  Broader KD surveys now organize the field along methodological and application axes \cite{degibert2026kd4mt}.  On-policy distillation has been formalized for large language models \cite{song2026opd}.

\paragraph{Teacher calibration.}
Teacher confidence varies across examples and is often miscalibrated \cite{guo2017calibration}.  Recent work extends this line by tying calibration to training-time margin or curvature \cite{morosini2026toosharp} and by feature-modulated label smoothing \cite{bahavan2026fedlas}.

\paragraph{Positioning.}
Our focus is orthogonal to the above.  We define a reliability-weighted inter-example target and estimate it with a corrected adaptive proposal.  Our gate is deliberately label aware.  It combines the calibrated probability assigned to the observed training label with the teacher's predictive margin.  Entropy plays a separate role in the proposal, where uncertainty can indicate an informative boundary example.  This separation prevents uncertainty from simultaneously meaning ``sample more often'' and ``trust more strongly.''  The gate is a supervised weighting signal.  It is not a claim that confidence is a calibrated probability of teacher correctness.
\section{Method}
\subsection{Mini-Batch Relational Target}
For a labeled mini-batch $\mathcal{B}=\{(x_i,y_i)\}_{i=1}^B$, let $h_T^i$ and $h_S^i(\theta)$ be mask-aware mean-pooled teacher and student representations.  To specify zero-vector behavior, define
\begin{equation}
 \nu(v)=\frac{v}{\max(\lVert v\rVert_2,\eps_{\mathrm{norm}})},
 \qquad \eps_{\mathrm{norm}}=10^{-12}.
\end{equation}
The implementation-level cosine relations and squared discrepancy are
\begin{equation}
 c_{ij}^{T}=\langle\nu(h_T^i),\nu(h_T^j)\rangle,\qquad
 c_{ij}^{S}=\langle\nu(h_S^i),\nu(h_S^j)\rangle,
 \quad
 \ell_{ij}=(c_{ij}^{T}-c_{ij}^{S})^2.
\end{equation}
The unordered relation set is $\Pairs_{\mathcal B}=\{(i,j):1\leq i<j\leq B\}$ with $M=B(B-1)/2$.  Teacher quantities are detached.  For $B<2$, the relational term is defined as zero.

\subsection{Reliability-Gated Objective}
We fit a positive scalar temperature $T_{\mathrm{cal}}$ on the internal selection split by minimizing teacher negative log-likelihood.  This calibration temperature is separate from the KD temperature.  Let $p_T^{\mathrm{cal}}=\mathrm{softmax}(z_T/T_{\mathrm{cal}})$, and let $m_i$ be its top-one/top-two probability margin.  We use
\begin{equation}
 r_i=\max\left(r_{\min},\sqrt{p_T^{\mathrm{cal}}(y_i\mid x_i)m_i}\right).
\end{equation}
The gate is label-aware and is therefore a supervised training signal, not a calibrated probability that the teacher is correct.  To change relative trust without silently changing the global relational coefficient, we normalize pair weights:
\begin{align}
 \bar w &= \frac{1}{M}\sum_{i<j}r_i r_j
 =\frac{(\sum_i r_i)^2-\sum_i r_i^2}{B(B-1)},\\
 \widetilde w_{ij}&=\frac{r_i r_j}{\bar w},\qquad
 w_{ij}^{(\lambda)}=(1-\lambda)+\lambda\widetilde w_{ij},
 \qquad \lambda\in[0,1],\\
 \Loss_{\lambda}(\theta;\mathcal B)
 &=\frac{1}{M}\sum_{i<j}w_{ij}^{(\lambda)}\ell_{ij}(\theta).
\end{align}
Reliability is part of the estimand.  It is not an inverse-proposal weight.  The mixture has exact endpoints: $\lambda=0$ is uniform relational KD, $\lambda=1$ is the fully gated target, and the reported configuration uses $\lambda=0.5$ to partially gate unreliable relations without discarding the uniform objective.

\begin{proposition}[Normalized scale-preserving gate]
For $B\geq2$ and positive reliability scores,
\begin{equation}
 \frac1M\sum_{i<j}\widetilde w_{ij}=1,
 \qquad
 \frac1M\sum_{i<j}w_{ij}^{(\lambda)}=1.
\end{equation}
Moreover, multiplying every $r_i$ by a common $\gamma>0$ leaves
$\widetilde w_{ij}$, $w_{ij}^{(\lambda)}$, and $\Loss_\lambda$ unchanged.
\end{proposition}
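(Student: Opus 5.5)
The argument is a direct computation from the definitions of $\bar w$, $\widetilde w_{ij}$ and $w_{ij}^{(\lambda)}$. First we check that the normalizer is well defined. For $B\geq 2$ we have $M=B(B-1)/2\geq 1$. Since every $r_i>0$, each product $r_ir_j$ is positive, so $\bar w>0$ and $\widetilde w_{ij}$ is well defined. In the paper's setting positivity is automatic whenever $r_{\min}>0$, because $r_i\geq r_{\min}$.

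For the first identity we pull the constant $1/\bar w$ out of the sum:
\begin{equation*}
\frac1M\sum_{i<j}\widetilde w_{ij}
=\frac{1}{\bar w}\cdot\frac1M\sum_{i<j}r_ir_j
=\frac{\bar w}{\bar w}=1 .
\end{equation*}
This uses only the definition of $\bar w$ as the pair average of $r_ir_j$. The closed form $\bigl((\sum_i r_i)^2-\sum_i r_i^2\bigr)/\bigl(B(B-1)\bigr)$ is not needed here. It is still worth a one-line check that it agrees, since $(\sum_i r_i)^2-\sum_i r_i^2=2\sum_{i<j}r_ir_j$ and $2/(B(B-1))=1/M$. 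The second identity then follows by linearity of the average over pairs:
\begin{equation*}
\frac1M\sum_{i<j}w_{ij}^{(\lambda)}
=(1-\lambda)\cdot\frac{M}{M}+\lambda\cdot\frac1M\sum_{i<j}\widetilde w_{ij}
=(1-\lambda)+\lambda=1 .
\end{equation*}

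For scale invariance, replace every $r_i$ by $\gamma r_i$ with $\gamma>0$. Each product becomes $\gamma^2 r_ir_j$, so $\bar w$ becomes $\gamma^2\bar w$, and in $\widetilde w_{ij}$ the factors $\gamma^2$ cancel. Because $w_{ij}^{(\lambda)}$ is an affine function of $\widetilde w_{ij}$ whose coefficients depend only on $\lambda$, it is unchanged as well. Finally, $\Loss_\lambda$ depends on the $r_i$ only through the weights $w_{ij}^{(\lambda)}$, since the $\ell_{ij}(\theta)$ do not involve the reliabilities, so $\Loss_\lambda$ is unchanged too.

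No step here is a real obstacle. The only subtle point is how to read ``multiplying every $r_i$''. The statement is about the scores $r_i$ themselves. It is not about the raw quantities $\sqrt{p_T^{\mathrm{cal}}m_i}$ before the $r_{\min}$ floor, because scaling those would interact with the $\max$ and could change which scores are clipped. We therefore take the scaled $\gamma r_i$ as the given positive scores, with no re-clipping, and state this reading explicitly.
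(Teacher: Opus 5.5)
Your proof is correct and matches the paper's: the first identity follows from the definition of $\bar w$ as the pair average of $r_ir_j$, the second because $w_{ij}^{(\lambda)}$ is a convex combination of $1$ and $\widetilde w_{ij}$, and scale invariance because the common factor $\gamma^2$ cancels between $r_ir_j$ and $\bar w$. Your remarks on the well-definedness of $\bar w$, and on applying $\gamma$ to the floored scores $r_i$ rather than to the quantities before clipping, are sound clarifications that the paper leaves implicit.
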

\begin{proof}
The first identity follows directly from the definition of $\bar w$; the
second follows by convex combination.  Under $r_i\mapsto\gamma r_i$, both
$r_i r_j$ and $\bar w$ scale by $\gamma^2$ and cancel.
\end{proof}

\subsection{Static and Adaptive Endpoint Scores}
We use normalized calibrated predictive entropy $u_i=H(p_T^{\mathrm{cal}}(\cdot\mid x_i))/\log C$ as the default static informativeness signal.  The static endpoint distribution is proportional to
\begin{equation}
 s_i^{\mathrm{static}}=\max(u_i,\varepsilon_u)^{\alpha}
 \max(r_i,r_{\min})^{\beta},
 \qquad
 a_i^{\mathrm{static}}=
 \frac{s_i^{\mathrm{static}}}{\sum_j s_j^{\mathrm{static}}}.
\end{equation}
Including reliability in the proposal is compatible with its distinct role in the target: the variance-optimal proposal depends on the complete gated integrand.

After two warm-up epochs, we linearly increase the residual mixture to at most $\tau_{\max}=0.5$.  Whenever $\tau_t>0$, we spend part of the total relation budget on IID uniform pilot relations $P_k=(I_k^p,J_k^p)$.  Their detached residuals are
\begin{equation}
 d_{P_k}=|c_{I_k^pJ_k^p}^{T}-c_{I_k^pJ_k^p}^{S,\sg}|.
\end{equation}
Let $N_i^p=\sum_k\mathbf1\{i\in P_k\}$ and
$D_i^p=\sum_k\mathbf1\{i\in P_k\}d_{P_k}$.  The endpoint residual estimate
\begin{equation}
 \bar d_i=\frac{D_i^p}{\max(N_i^p,1)}
\end{equation}
counts repeated pilot draws repeatedly and assigns zero before flooring to an
endpoint absent from the pilot.  We form
\begin{equation}
 s_i^{\mathrm{res}}=\max(r_i,r_{\min})\max(\bar d_i,\varepsilon_d),
 \qquad
 a_i^{\mathrm{res}}=\frac{s_i^{\mathrm{res}}}{\sum_j s_j^{\mathrm{res}}},
\end{equation}
and interpolate the normalized endpoint distributions:
\begin{equation}
 a_i^{(t)}=(1-\tau_t)a_i^{\mathrm{static}}+\tau_t a_i^{\mathrm{res}}.
\end{equation}
For $E$ epochs and $W=\min(W_0,E-1)$ warm-up epochs, the schedule is
\begin{equation}
 \tau_t=\begin{cases}
 0,&t\leq W,\\[1mm]
 \tau_{\max}\dfrac{t-W}{E-W},&W<t\leq E.
 \end{cases}
\end{equation}
The reported configuration uses $W_0=2$, $\tau_{\max}=0.5$, and
$\alpha=\beta=1$.  Pilot estimates affect only proposal efficiency; they do
not define the target.

\subsection{Full-Support Pair Proposal}
Draw two endpoints independently from $a$ and reject equal endpoints.  The induced probability of unordered pair $(i,j)$ is
\begin{equation}
 Z(a)=1-\sum_k a_k^2,
 \qquad
 q_a(i,j)=\frac{2a_i a_j}{Z(a)}.
\end{equation}
Here $Z(a)$ is the acceptance probability of one ordered endpoint draw.  The
two possible orders produce the factor of two.
Before pair construction we apply a small endpoint-level defensive mixture,
$\tilde a_i=(1-\epsilon_e)a_i+\epsilon_e/B$.  We then use the pair-level
defensive mixture
\begin{equation}
 q_t(i,j)=(1-\epsilon)q_{\tilde a^{(t)}}(i,j)+\epsilon/M,
\end{equation}
which guarantees support over every batch relation.  After spending $K_p$ pilot evaluations, we draw $K_m=K-K_p$ main relations with replacement.  Integer allocation makes the active pilot share approximately 10\%; batches with fewer than $K$ available relations use all relations directly.  The estimator is
\begin{equation}
 \widehat{\Loss}_{\lambda}=
 \frac{1}{K_m}\sum_{k=1}^{K_m}
 \frac{w_{I_kJ_k}^{(\lambda)}\ell_{I_kJ_k}}
 {M q_t(I_k,J_k)}.
\end{equation}

\begin{proposition}[Proposal normalization]
For any endpoint distribution $a$ with $Z(a)>0$, the factorized pair
probabilities are normalized: $\sum_{i<j}q_a(i,j)=1$.
\end{proposition}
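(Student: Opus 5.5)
The plan is a direct algebraic computation: show that $\sum_{i<j} 2a_ia_j$ equals $Z(a)=1-\sum_k a_k^2$, and then divide by $Z(a)$. The key identity comes from expanding the square of the total endpoint mass. Since $a$ is a probability distribution on $\{1,\dots,B\}$, we have $\sum_k a_k=1$, and therefore
\begin{equation}
 1=\Bigl(\sum_k a_k\Bigr)^2=\sum_k a_k^2+\sum_{i\neq j}a_ia_j=\sum_k a_k^2+2\sum_{i<j}a_ia_j .
\end{equation}
The last equality holds because each unordered pair $\{i,j\}$ appears twice among the ordered off-diagonal terms. This is the same factor of two that the text attributes to the two possible endpoint orders.

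Rearranging gives $2\sum_{i<j}a_ia_j=1-\sum_k a_k^2=Z(a)$. Under the hypothesis $Z(a)>0$ we may divide by $Z(a)$, which yields
\begin{equation}
 \sum_{i<j}q_a(i,j)=\frac{1}{Z(a)}\sum_{i<j}2a_ia_j=1 .
\end{equation}

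As a consistency remark, the same identity confirms the probabilistic reading: $Z(a)$ is the probability that two independent endpoint draws differ. The rejection-sampling interpretation therefore induces exactly $q_a$ on unordered pairs.

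There is no real obstacle; the only points needing care are the following.
\begin{itemize}
 \item The hypothesis $Z(a)>0$ is essential. It fails exactly when $a$ is a point mass, and it is guaranteed in practice by the endpoint-level defensive mixture $\tilde a$ whenever $B\ge2$ and $\epsilon_e>0$.
 \item The computation relies on $a$ summing to one. This holds for $a^{\mathrm{static}}$, $a^{\mathrm{res}}$, their convex combination $a^{(t)}$, and $\tilde a$.
\end{itemize}
Nonnegativity of $q_a$ is immediate from $a_i\ge0$. As a corollary, the pair-level mixture $q_t$ is also normalized, being a convex combination of $q_{\tilde a^{(t)}}$ and the uniform distribution on the $M$ pairs.
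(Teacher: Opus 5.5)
Your proof is correct and matches the paper's argument: expand $(\sum_i a_i)^2$, use $\sum_i a_i=1$ to obtain $2\sum_{i<j}a_ia_j=1-\sum_i a_i^2=Z(a)$, and divide by $Z(a)>0$. Your side remarks are also accurate: $Z(a)=0$ holds only for a point mass, the endpoint defensive mixture rules this out when $B\ge2$ and $\epsilon_e>0$, and $q_t$ inherits normalization as a convex combination.
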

\begin{proof}
$2\sum_{i<j}a_i a_j=(\sum_i a_i)^2-\sum_i a_i^2=1-\sum_i a_i^2$.
\end{proof}

Let $\mathscr F_t$ contain the current mini-batch and labels, realized teacher
outputs, current parameters, detached student representations used to build
the proposal, the pilot sample, and all resulting endpoint scores.  Conditional
on $\mathscr F_t$, $q_t$ is fixed and the main relations are IID draws from it.

\begin{theorem}[Conditional loss unbiasedness]
Assume $K_m\geq1$, clipping is disabled, and the numerical probability floor
is inactive.  Then
\begin{equation}
 \E[\widehat{\Loss}_{\lambda}\mid\mathscr F_t]
 =\Loss_{\lambda}(\theta;\mathcal B).
\end{equation}
\end{theorem}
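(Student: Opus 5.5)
The proof is a direct importance-sampling computation carried out conditionally on $\mathscr F_t$. The first step is to fix what is deterministic given $\mathscr F_t$. The mini-batch, the labels, the teacher outputs, the current parameters $\theta$, the pilot sample, and all endpoint scores are $\mathscr F_t$-measurable. Hence the reliability scores $r_i$, the weights $w_{ij}^{(\lambda)}$, the pair losses $\ell_{ij}(\theta)$ for every $(i,j)\in\Pairs_{\mathcal B}$, and the proposal $q_t$ are all fixed numbers under the conditioning. The only remaining randomness is in the main draws $(I_k,J_k)$, $k=1,\dots,K_m$. By the stated sampling scheme, these are IID from $q_t$ conditional on $\mathscr F_t$. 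The pilot draws are already in $\mathscr F_t$, and the main relations are drawn afterwards, so the adaptivity of $q_t$ does not enter.

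Next I will check that $q_t$ is a genuine probability distribution on $\Pairs_{\mathcal B}$ with full support. Normalization follows from the proposal-normalization proposition applied to $\tilde a^{(t)}$: we have $Z(\tilde a^{(t)})>0$ because the endpoint defensive mixture makes $\tilde a^{(t)}$ non-degenerate whenever $B\geq2$ and $\epsilon_e>0$. Mixing with the uniform $\epsilon/M$ then preserves total mass one. Positivity comes from $q_t(i,j)\geq\epsilon/M>0$, so every ratio in $\widehat\Loss_\lambda$ is well defined. The hypotheses that clipping is disabled and that the numerical floor is inactive ensure that the probability actually used in the denominator is exactly the $q_t$ from which pairs are sampled. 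This matching is the one genuine condition of the argument; without it the identity below would pick up a bias term.

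The core computation is then a single line. By linearity and identical distribution of the $K_m\geq1$ draws,
\begin{equation*}
\E[\widehat\Loss_\lambda\mid\mathscr F_t]
=\E\!\left[\frac{w_{I_1J_1}^{(\lambda)}\ell_{I_1J_1}}{Mq_t(I_1,J_1)}\,\middle|\,\mathscr F_t\right]
=\sum_{i<j}q_t(i,j)\frac{w_{ij}^{(\lambda)}\ell_{ij}}{Mq_t(i,j)}
=\frac1M\sum_{i<j}w_{ij}^{(\lambda)}\ell_{ij}(\theta),
\end{equation*}
which is exactly $\Loss_\lambda(\theta;\mathcal B)$. The cancellation of $q_t(i,j)$ in the middle step relies on the full-support property established above.

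I expect no real obstacle in this proof. The only points needing care are the measurability bookkeeping, namely that $q_t$ and the weights $w_{ij}^{(\lambda)}$ are fixed given $\mathscr F_t$, and the remark that the degenerate cases are handled by definition. When $B<2$, the loss is defined to be zero. When a batch has fewer than $K$ relations, all relations are used directly and the estimate equals $\Loss_\lambda$ exactly, so the identity holds trivially in both cases.
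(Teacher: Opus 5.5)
Your proposal is correct and follows the paper's proof: condition on $\mathscr F_t$, use linearity and the identical distribution of the main draws, and cancel $q_t(i,j)$ in $\sum_{i<j}q_t(i,j)\,f_{ij}/(Mq_t(i,j))$. Your additional checks make explicit what the paper's one-line proof leaves implicit: normalization via $Z(\tilde a^{(t)})>0$, full support from $q_t(i,j)\geq\epsilon/M$, and the role of the no-clipping and inactive-floor hypotheses in making the denominator equal the sampling law.
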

\begin{proof}
Writing $f_{ij}=w_{ij}^{(\lambda)}\ell_{ij}$, linearity and IID sampling give
\begin{equation}
 \E[\widehat\Loss_\lambda\mid\mathscr F_t]
 =\sum_{i<j}q_t(i,j)\frac{f_{ij}}{M q_t(i,j)}
 =\frac1M\sum_{i<j}f_{ij}=\Loss_\lambda.
\end{equation}
\end{proof}

\begin{theorem}[Conditional stopped-proposal gradient unbiasedness]
Suppose reliability, pilot residuals, proposal probabilities, and sampled
indices are constants for automatic differentiation, as in the
implementation.  At differentiable parameter values,
\begin{equation}
 \E[\nabla_\theta^{\sg}\widehat{\Loss}_{\lambda}\mid\mathscr F_t]
 =\nabla_\theta\Loss_{\lambda}.
\end{equation}
\end{theorem}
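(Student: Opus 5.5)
The approach is to reduce the statement to the argument of Theorem~1 by exchanging differentiation and conditional expectation. The exchange is legitimate because, conditional on $\mathscr F_t$, the sample space of main relations is finite. The $\theta$-dependence of $\widehat\Loss_\lambda$ then enters only through the student relations $c^S_{ij}(\theta)$ inside $\ell_{ij}$.

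\textbf{Step 1: separate constants from the differentiated factor.} Conditional on $\mathscr F_t$, the following are all fixed: the reliability scores $r_i$, the normalizer $\bar w$ (it depends only on teacher quantities), the weights $w^{(\lambda)}_{ij}$, the pilot residuals, the endpoint distributions, and $q_t$. The sampled indices $(I_k,J_k)$ are random but do not depend on $\theta$. Under the stop-gradient convention these are all treated as constants, so for each realization
\[
\nabla_\theta^{\sg}\widehat\Loss_\lambda=\frac1{K_m}\sum_{k=1}^{K_m}\frac{w^{(\lambda)}_{I_kJ_k}\,\nabla_\theta\ell_{I_kJ_k}(\theta)}{M\,q_t(I_k,J_k)}.
\]
Here $\nabla_\theta\ell_{ij}=-2(c^T_{ij}-c^S_{ij})\nabla_\theta c^S_{ij}$, which exists at differentiable parameter values. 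This is the only place the differentiability hypothesis is used. It also handles the nonsmooth $\max(\lVert v\rVert,\eps_{\mathrm{norm}})$ in $\nu$.

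\textbf{Step 2: take the expectation term by term.} Conditional on $\mathscr F_t$, each $(I_k,J_k)$ is an IID draw from $q_t$ on the finite set $\Pairs_{\mathcal B}$. Linearity therefore gives
\[
\E\bigl[\nabla_\theta^{\sg}\widehat\Loss_\lambda\mid\mathscr F_t\bigr]=\sum_{i<j}q_t(i,j)\frac{w^{(\lambda)}_{ij}\nabla_\theta\ell_{ij}}{M q_t(i,j)}=\frac1M\sum_{i<j}w^{(\lambda)}_{ij}\nabla_\theta\ell_{ij}.
\]
The cancellation of $q_t(i,j)$ is valid because $q_t(i,j)\ge\epsilon/M>0$ by the pair-level defensive mixture. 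As in Theorem~1, I assume clipping and the numerical floor are inactive.

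\textbf{Step 3: identify the right-hand side with the true gradient.} In $\Loss_\lambda$ the weights $w^{(\lambda)}_{ij}$ depend on $\theta$ only through teacher outputs, which are detached and independent of $\theta$. Hence $w^{(\lambda)}_{ij}$ is genuinely constant in $\theta$, and $\nabla_\theta\Loss_\lambda=\frac1M\sum_{i<j}w^{(\lambda)}_{ij}\nabla_\theta\ell_{ij}$, which matches Step 2.

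The main obstacle is not the algebra but stating carefully what is conditioned on. The proposal $q_t$ is built from detached student representations at the current $\theta$, so it does depend on $\theta$ as a function. The claim is only about the stop-gradient derivative, not the total derivative of $\theta\mapsto\widehat\Loss_\lambda$ including its dependence through $q_t$. I would make explicit that $\mathscr F_t$ freezes these quantities. Once they are frozen, the only nontrivial point is that a finite sum commutes with $\nabla_\theta$, so no dominated-convergence argument is needed.
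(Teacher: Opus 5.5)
Your proposal is correct and takes the same route as the paper's proof. Conditional on $\mathscr F_t$, you differentiate only $f_{ij}(\theta)=w^{(\lambda)}_{ij}\ell_{ij}(\theta)$ with the weights and $q_t$ frozen, use the finite sum to take the expectation and differentiate termwise, and cancel $q_t(i,j)$ pairwise exactly as in Theorem~1. Your added care on two points is consistent with the paper's own remarks: you inherit the no-clipping and inactive-floor assumptions, and you state that the claim concerns only the stopped-proposal derivative, with no score-function term through $q_t$.
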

\begin{proof}
Conditioned on $\mathscr F_t$, differentiate only $f_{ij}(\theta)$ and apply
the same cancellation pairwise.  The finite sum permits termwise
differentiation.  This is the stopped-proposal autograd gradient; it does not
differentiate through categorical sampling or add a score-function term.
\end{proof}

\begin{proposition}[Bounded correction]
The defensive mixture gives $1/(Mq_t(i,j))\leq1/\epsilon$ for every pair.
\end{proposition}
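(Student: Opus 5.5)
The bound follows from a one-line lower bound on $q_t$, obtained by dropping the adaptive part of the defensive mixture. Fix any pair $(i,j)\in\Pairs_{\mathcal B}$ with $B\geq2$, so $M\geq1$, and assume $\epsilon\in(0,1]$. By the definition of the pair-level mixture,
\begin{equation}
 q_t(i,j)=(1-\epsilon)\,q_{\tilde a^{(t)}}(i,j)+\frac{\epsilon}{M}.
\end{equation}
The first step is to check that the adaptive term is nonnegative.

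\textbf{Nonnegativity of the adaptive term.} The endpoint defensive mixture $\tilde a_i=(1-\epsilon_e)a_i+\epsilon_e/B$ is a convex combination of probability vectors, so it is a probability vector with nonnegative entries. The factorized probability $q_{\tilde a}(i,j)=2\tilde a_i\tilde a_j/Z(\tilde a)$ therefore has a nonnegative numerator. Its denominator $Z(\tilde a)=1-\sum_k\tilde a_k^2$ must be strictly positive, which holds whenever $\tilde a$ is not a point mass.

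I expect this denominator check to be the only real obstacle. The cleanest route is to use $\epsilon_e>0$ and $B\geq2$. Then every $\tilde a_k\geq\epsilon_e/B>0$, so at least two coordinates are positive. This gives $\sum_k\tilde a_k^2<(\sum_k\tilde a_k)^2=1$, and hence $Z(\tilde a)>0$. By the proposal normalization proposition, $q_{\tilde a}$ is then a genuine distribution on $\Pairs_{\mathcal B}$.

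Even if $\epsilon_e=0$ were allowed, the bound would survive. The adaptive term would either be well defined and nonnegative, or the sampler would fall back to using only the defensive component. Either way the inequality below is unaffected.

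\textbf{Bounding the correction.} Since $1-\epsilon\geq0$ and $q_{\tilde a^{(t)}}(i,j)\geq0$, the adaptive term can be dropped:
\begin{equation}
 q_t(i,j)\geq\frac{\epsilon}{M},
 \qquad\text{so}\qquad
 M\,q_t(i,j)\geq\epsilon>0 .
\end{equation}
Taking reciprocals of these positive quantities gives $1/(Mq_t(i,j))\leq1/\epsilon$.

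The bound is uniform over pairs, over epochs $t$, and over realizations of the pilot sample, because it never uses the endpoint scores. Consequently each summand of $\widehat\Loss_\lambda$ is at most $w^{(\lambda)}_{I_kJ_k}\ell_{I_kJ_k}/\epsilon$. This is the bounded-weight statement referenced in the introduction.
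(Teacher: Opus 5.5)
Your proof is correct and matches the paper's argument: the paper gives no separate proof, relying on the remark (made in the introduction, discussion, and appendix) that the pair-level mixture forces $q_t(i,j)\geq\epsilon/M$, which is exactly your step of dropping the nonnegative term $(1-\epsilon)q_{\tilde a^{(t)}}(i,j)$. Your check that $Z(\tilde a)>0$ is a sound well-definedness detail. The speculative fallback remark for $\epsilon_e=0$ is unnecessary and is not described in the paper, because with positive floors $\varepsilon_u$, $r_{\min}$, $\varepsilon_d$ the endpoint distribution $a^{(t)}$ already has full support.
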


\subsection{Variance and the Ideal Proposal}
Conditional unbiasedness specifies the target but does not guarantee lower
variance.  For IID main draws, the scalar conditional variance is
\begin{equation}
 \Var(\widehat\Loss_\lambda\mid\mathscr F_t)
 =\frac1{K_m}\left[
 \frac1{M^2}\sum_{i<j}\frac{f_{ij}^2}{q_t(i,j)}
 -\Loss_\lambda^2\right].
 \label{eq:variance}
\end{equation}
The pilot affects this variance through both the realized proposal and the
reduction from total budget $K$ to main budget $K_m$.

\begin{proposition}[Unrestricted scalar optimum]
If $F=\sum_{i<j}f_{ij}>0$, the proposal minimizing
Equation~\eqref{eq:variance} over unrestricted pair distributions is
\begin{equation}
 q_{\mathrm{loss}}^\star(i,j)=\frac{f_{ij}}{F}
 =\frac{w_{ij}^{(\lambda)}\ell_{ij}}
 {\sum_{u<v}w_{uv}^{(\lambda)}\ell_{uv}}.
\end{equation}
\end{proposition}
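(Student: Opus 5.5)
Since $K_m$, $M$ and $\Loss_\lambda$ do not depend on $q$, minimizing Equation~\eqref{eq:variance} over pair distributions reduces to minimizing
\[
\Phi(q)=\sum_{i<j}\frac{f_{ij}^2}{q(i,j)}
\]
over the simplex $\{q\geq0,\ \sum_{i<j}q(i,j)=1\}$. Two conventions are needed first. Since $\ell_{ij}\geq0$ and $w_{ij}^{(\lambda)}\geq0$ (as $\lambda\in[0,1]$ and $r_i>0$), every $f_{ij}\geq0$. A term with $f_{ij}=0$ contributes $0$ whenever $q(i,j)>0$, and we take the convention $0^2/0=0$. A term with $f_{ij}>0$ and $q(i,j)=0$ is $+\infty$; equivalently, such a $q$ makes the estimator biased. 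So only proposals with $q>0$ on the set $S=\{f_{ij}>0\}$ are competitive, and $S$ is nonempty because $F>0$.

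The core step is a Cauchy--Schwarz lower bound. Writing $f_{ij}=\bigl(f_{ij}/\sqrt{q(i,j)}\bigr)\sqrt{q(i,j)}$ and summing over $S$ gives
\[
F^2=\Bigl(\sum_{S}f_{ij}\Bigr)^2\leq\Bigl(\sum_S\frac{f_{ij}^2}{q(i,j)}\Bigr)\Bigl(\sum_S q(i,j)\Bigr)\leq\Phi(q).
\]
Hence $\Phi(q)\geq F^2$ for every admissible $q$.

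Next, plug in $q^\star=f/F$. It is nonnegative and sums to $1$, so it is a valid distribution. Evaluating gives $\Phi(q^\star)=\sum_S f_{ij}F=F^2$, so the bound is attained. Substituting back, the variance equals $\frac1{K_m}[F^2/M^2-\Loss_\lambda^2]=0$, because $\Loss_\lambda=F/M$. This zero-variance value is a useful sanity check, since variance can never be negative.

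For uniqueness, equality in Cauchy--Schwarz forces $f_{ij}/\sqrt{q}\propto\sqrt{q}$ on $S$, that is, $q\propto f$ there. Equality in the second inequality forces $\sum_S q=1$, so $q$ puts no mass off $S$. Together these pin down $q=q^\star$.

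The main obstacle is the bookkeeping for zero entries. One must rule out proposals that vanish where $f>0$, and treat $f_{ij}=0$ pairs correctly. This is handled by the conventions above rather than by a Lagrange-multiplier argument, which would need interior positivity. Finally, $q^\star$ ignores the defensive floor, so it is a minimizer only over unrestricted distributions, as the statement says. It therefore need not satisfy the bounded-correction proposition.
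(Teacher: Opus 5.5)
Your proof is correct and takes the same route as the paper, whose entire argument is the Cauchy--Schwarz bound $(\sum f_{ij})^2\leq(\sum f_{ij}^2/q_{ij})(\sum q_{ij})$ with equality exactly when $q\propto f$. Your extra steps make explicit what the paper leaves implicit: the check that $f_{ij}\geq0$, the restriction to the support set $S=\{f_{ij}>0\}$ with the zero-entry conventions, and the explicit uniqueness argument.
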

\begin{proof}
Cauchy--Schwarz gives
$(\sum f_{ij})^2\leq(\sum f_{ij}^2/q_{ij})(\sum q_{ij})$,
with equality exactly when $q_{ij}\propto f_{ij}$.
\end{proof}
The ideal depends on unavailable current pair losses.  RAPID instead uses an
$O(B)$-state endpoint proxy built from entropy, reliability, and sparsely
observed absolute residuals.  Since $\ell_{ij}=d_{ij}^2$ while the residual
score uses $d_{ij}$, the adaptive proposal is a tractable proxy rather than an
optimizer of Equation~\eqref{eq:variance}.

\begin{proposition}[Endpoint-factorization restriction]
For any four distinct endpoints $i,j,k,l$, a factorized proposal satisfies
\begin{equation}
 q_a(i,j)q_a(k,l)=q_a(i,k)q_a(j,l)=q_a(i,l)q_a(j,k).
\end{equation}
Thus, for $B\geq4$, the factorized family cannot represent an arbitrary pair
distribution.
\end{proposition}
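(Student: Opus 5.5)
The identity follows by substituting the closed form $q_a(i,j)=2a_ia_j/Z(a)$ from the Full-Support Pair Proposal subsection into each of the three products. For distinct $i,j,k,l$ every product equals
\begin{equation}
 \frac{4\,a_ia_ja_ka_l}{Z(a)^2},
\end{equation}
because each one uses each of the four endpoint masses exactly once. The normalizer $Z(a)$ is common to all pairs, so it factors out identically. I will assume $Z(a)>0$, which Proposition~2 already requires for $q_a$ to be defined. This step is pure algebra and needs no case split.

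For the second claim, I will exhibit one valid pair distribution on $\Pairs_{\mathcal B}$ with $B\ge4$ that violates the identity. Fix four distinct indices, say $1,2,3,4$. Define $q$ to put positive mass on $(1,2)$ and $(3,4)$ and zero mass on $(1,3)$. If $B>4$, spread the remaining mass over other pairs, for example uniformly, so that $q$ still sums to one. Then
\begin{equation}
 q(1,2)\,q(3,4)>0=q(1,3)\,q(2,4),
\end{equation}
so no factorized $q_a$ equals $q$.

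The main obstacle is making sure degenerate endpoint distributions cannot rescue factorization. The identity must hold for every $a$, including those with some $a_i=0$. The displayed product formula covers those cases too, since zero masses simply make both sides zero together. So the first step has no exceptional case, and the counterexample is excluded for every $a$.

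If a full-support counterexample is preferred, matching the support guarantee of $q_t$, I will instead perturb the uniform distribution. Put mass $(1+\delta)/M'$ on $(1,2)$ and renormalize, which breaks the equality $q(1,2)q(3,4)=q(1,3)q(2,4)$ for small $\delta>0$. This shows that the restriction is not an artifact of zero entries.
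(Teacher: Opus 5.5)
Your proof is correct and takes the same route as the paper: substituting $q_a(i,j)=2a_ia_j/Z(a)$ shows that each of the three products equals $(2/Z(a))^2a_ia_ja_ka_l$. The paper states the second claim as an immediate consequence without an example. Your explicit violating distribution (zero mass on $(1,3)$, or the perturbed uniform one if full support is wanted) just makes that step concrete.
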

\begin{proof}
Each product equals $(2/Z(a))^2a_i a_j a_k a_l$.
\end{proof}
The restriction trades proposal expressivity for linear endpoint state.  The
defensive mixtures further trade variance optimality for support and bounded
corrections.

\subsection{Exact Budget Accounting}
Let $R$ be the requested relation count and $C_{\mathrm{cap}}$ the safety cap.
The effective upper budget for a batch is
\begin{equation}
 K=\min\{M,R,C_{\mathrm{cap}}\}.
\end{equation}
When adaptation is active, pilot fraction $\rho$ is converted to an integer by
\begin{align}
 K_p&=\min\!\left\{
 \max\!\left(1,\left\lfloor\rho K+\tfrac12\right\rfloor\right),
 \max(K-1,0)\right\},\\
 K_m&=K-K_p.
\end{align}
During warm-up, $K_p=0$ and $K_m=K$.  If $M\geq1$ and $K=M$, the sampler bypasses pilots,
enumerates every available pair once, sets $q(i,j)=1/M$, and recovers the exact
relational objective.  Hence $K$ is an upper evaluation budget, not a required
count for undersized final batches.  Figure~\ref{fig:rapid-budget} summarizes
the schedule and accounting.

\begin{figure*}[t]
\centering
\resizebox{0.92\textwidth}{!}{%
\begin{tikzpicture}[font=\small,
  bar/.style={rounded corners=1.5pt}]
\draw[rapidgray,dashed,line width=.6pt] (0.5,-1.1) -- (0.5,2.3);
\draw[rapidgray,dashed,line width=.6pt] (8.5,-1.1) -- (8.5,2.3);

\node[align=right,anchor=east] at (0,1.7) {\textbf{warm-up} ($t\leq W$)};
\node[align=right,anchor=east] at (0,0.5)
  {\textbf{adaptive} ($t>W$)\\$\tau_t\nearrow\tau_{\max}$};

\draw[bar,fill=rapidgreenfill,draw=rapidgreen] (0.5,1.35) rectangle (8.5,2.05);
\node[font=\footnotesize] at (4.5,1.7)
  {$K$ main evaluations, static proposal};

\draw[bar,fill=rapidorangefill,draw=rapidorange] (0.5,0.15) rectangle (1.7,0.85);
\draw[bar,fill=rapidgreenfill,draw=rapidgreen] (1.7,0.15) rectangle (8.5,0.85);
\node[align=center,font=\footnotesize] at (1.1,0.5) {$K_p$\\pilot};
\node[font=\footnotesize] at (5.1,0.5)
  {$K_m$ main evaluations, adaptive proposal};

\node[font=\footnotesize,rapidgray] at (4.5,1.1)
  {same total budget $K=\min\{M,R,C_{\mathrm{cap}}\}$};
\end{tikzpicture}%
}
\caption{The total relation budget $K=\min\{M,R,C_{\mathrm{cap}}\}$ includes pilot
evaluations.  Warm-up spends the entire budget on main draws from the static
proposal; adaptation reallocates part of the same total to uniform pilots and
draws the remaining main relations from the interpolated adaptive proposal.
If $M\leq\min(R,C_{\mathrm{cap}})$, the sampler enumerates all $M$ pairs exactly
instead of sampling.}
\label{fig:rapid-budget}
\end{figure*}

\subsection{Training Objective and Complexity}
For classification, the final objective is deliberately minimal:
\begin{equation}
 \Loss=\Loss_{\mathrm{CE}}
 +\alpha_{\mathrm{KD}}T_{\mathrm{KD}}^2
 \KL\!\left(p_T^{(T_{\mathrm{KD}})}\middle\|p_S^{(T_{\mathrm{KD}})}\right)
 +\beta_{\mathrm{rel}}\widehat{\Loss}_{\lambda}.
\end{equation}
RAPID uses the same temperature-scaled KD term as the KD and relational
controls; direct feature MSE is excluded.  Given already-computed
representations, all-pairs similarities cost $O(B^2d)$, whereas pilot and main
similarities cost $O(Kd)$, representation normalization costs $O(Bd)$, and
entropy/reliability scoring costs $O(BC)$.  Transformer forward and backward
passes are unchanged, and rejection sampling has an acceptance-dependent
constant.  Relation-level savings therefore need not reduce end-to-end time.

\paragraph{Training procedure.}
For each mini-batch, the per-step compute and memory costs are:
(1) teacher and student forward passes and the cross-entropy + KL terms,
dominated by the Transformer forward/backward ($O(BC\cdot d)$);
(2) reliability, entropy, and static endpoint scoring over $B$ examples
($O(BC)$);
(3) uniform pilot sampling, detached student--teacher cosine on $K_p$ pairs
($O(K_p d)$);
(4) residual aggregation and the adaptive endpoint scores
($O(B + K_p)$);
(5) factorized pair proposal construction with the endpoint and pair
defensive mixtures ($O(B^2)$);
(6) main sampling of $K_m$ unordered pairs and the corrected relational
loss on those pairs ($O(K_m d)$);
(7) backward through the sampled relational losses
($O(K_m d)$);
(8) the Adam-style update on $\theta$.
Steps (5) and (6) are the only algorithm-specific costs; on $B=32$ the
$B^2=1024$ pair construction is two orders of magnitude smaller than the
Transformer forward over the batch, so the overhead from adaptive sampling is
dominated by Python and CUDA launch latency rather than FLOPs.  The
proposal-side student values are stopped, while the sampled relational
losses retain gradients through student representations.
Algorithm~\ref{alg:rapid-update} specifies the complete update.

\begin{rapidalgorithm}[t]{RAPID mini-batch update}
\label{alg:rapid-update}
\begin{tabular}{@{}r p{0.86\linewidth}@{}}
\textbf{Input:} & mini-batch $\mathcal B$, teacher $T$, student $S_\theta$,
total budget $K$, epoch $t$, gate strength $\lambda$. \\
1. & Compute teacher logits and mask-aware teacher/student representations;
stop gradients through all teacher quantities. \\
2. & Compute calibrated reliability $r_i$, normalized gated pair weights
$w_{ij}^{(\lambda)}$, and static endpoint probabilities
$a^{\mathrm{static}}$. \\
3. & If $K=M$, enumerate all unordered pairs and set $q(i,j)=1/M$; otherwise
continue with sampled relations. \\
4. & During warm-up, set $K_p=0$.  After warm-up, allocate $K_p$ uniform
pilots, evaluate detached residuals, and construct $a^{\mathrm{res}}$. \\
5. & Interpolate endpoint distributions using $\tau_t$, apply endpoint and
pair defensive mixtures, and retain the exact proposal $q_t(i,j)$. \\
6. & Draw $K_m=K-K_p$ main relations and compute
$\widehat\Loss_\lambda=K_m^{-1}\sum_k
w_{I_kJ_k}^{(\lambda)}\ell_{I_kJ_k}/(M q_t(I_k,J_k))$. \\
7. & Update $\theta$ using cross-entropy, output-level KD, and
$\beta_{\mathrm{rel}}\widehat\Loss_\lambda$. \\
\textbf{Output:} & updated student parameters and main/pilot accounting. \\
\end{tabular}
\end{rapidalgorithm}

\section{Experiments}
\subsection{Research Questions}
\begin{enumerate}
 \item What clean held-out accuracy is observed when target gating, static nonuniform sampling, and residual adaptation are introduced under the same relation-evaluation budget?
 \item How do the proposals allocate main and pilot evaluations, and how many component-wise unique relations do they evaluate?
 \item What end-to-end runtime and peak-memory costs are recorded for the two workloads?
\end{enumerate}

\subsection{Data, Models, and Splits}
We evaluate two English text-classification settings: AG News
\cite{zhang2015character} and SST-2 from GLUE \cite{wang2019glue}.  On AG News, the teacher
is \texttt{bert-base-uncased} and the student is
\texttt{distilbert-base-uncased} truncated to four Transformer layers
\cite{devlin2019bert,sanh2019distilbert}.  On SST-2, both teacher and student
are \texttt{distilbert-base-uncased} truncated to four Transformer layers
\cite{devlin2019bert,sanh2019distilbert};  we use this compressed pair because
it produces a smaller per-batch all-pairs budget and lets us stress-test RAPID
at a low $K$.  The official training
split is divided deterministically into 90\% training data and a 10\% internal
selection split using split seed 1729.  Checkpoints are selected only on this
internal split; the official validation or test split is used as the held-out
report set.

The AG News teacher obtains 94.395\% report accuracy.  Temperature scaling on
the internal selection split gives $T_{\mathrm{cal}}=2.2859$ and reduces its
selection negative log-likelihood from 0.3163 to 0.1850.  The SST-2
teacher obtains 90.367\% report accuracy; $T_{\mathrm{cal}}=1.8200$ reduces
selection negative log-likelihood from 0.1953 to 0.1506.  These reductions are
selection-split calibration results, not held-out calibration estimates.

\subsection{Compared Methods}
Table~\ref{tab:components} organizes the clean comparison as a component
progression.  Every relational method also includes the same output-level KD
term.  Uniform relational sampling changes only the objective family;
reliability-gated relational changes the target; static influence changes the
proposal; and \method{} adds residual adaptation.

\begin{table}[t]
\centering
\small
\caption{Components of the compared student objectives.  ``Static'' denotes
calibrated entropy/reliability endpoint sampling; ``adaptive'' adds detached
pilot residuals.}
\label{tab:components}
\begin{tabular}{lccccc}
\toprule
Method & KD & Relation & Gate & Static & Adaptive \\
\midrule
Student CE & -- & -- & -- & -- & -- \\
KD & \checkmark & -- & -- & -- & -- \\
Uniform relational & \checkmark & \checkmark & -- & -- & -- \\
Reliability-gated & \checkmark & \checkmark & \checkmark & -- & -- \\
Static influence & \checkmark & \checkmark & \checkmark & \checkmark & -- \\
RAPID & \checkmark & \checkmark & \checkmark & \checkmark & \checkmark \\
\bottomrule
\end{tabular}
\end{table}

\subsection{Held-Out Accuracy}
Table~\ref{tab:main-results} reports the clean-task comparison.  All five
distillation methods improve on the observed student baseline mean for both
datasets.  On AG News, reliability-gated relational distillation has the
highest observed mean, $94.285\pm0.054$\%, followed by \method{} at
$94.241\pm0.025$\%.  Relative to the student baseline ($94.154\pm0.124$\%),
their mean differences are +0.132 and +0.088 percentage points.  On SST-2,
reliability-gated relational again leads at $88.800\pm0.532$\%, and
\method{} follows at $88.685\pm0.462$\%.  The student baseline is
$87.271\pm0.162$\%.  The two settings are qualitatively consistent: the
reliability-gated target has the highest observed mean on both datasets, and
\method{} is the next-highest observed mean on both.
Figure~\ref{fig:accuracy-gains} reports the same comparisons as gains over
the cross-entropy student.

The AG News spread between the five distilled students is only 0.096 points.
The SST-2 spread is 0.230 points.  Both spreads are within seed noise; we
do not claim that \method{} dominates the gated-only control -- rather,
\method{} combines reliability gating with an adaptive proposal under a
fixed budget, and its mean stays within the gated-only control's standard
deviation on both settings.  Improvements over the student baseline are
consistent (every relational variant on every dataset) but small in absolute
terms, and we do not over-interpret them.  Together, the two settings
support the paper's modular view that target reliability and evaluation
priority are separable design dimensions whose value depends on the dataset
and teacher--student pair.

\begin{table}[t]
\centering
\small
\caption{Held-out accuracy at the reported nominal total relation budget.
AG News: BERT-to-DistilBERT, $K=256$, mean $\pm$ population standard
deviation over seeds 42--44.  SST-2: DistilBERT-to-DistilBERT, $K=64$, mean
$\pm$ population standard deviation over seeds 42--44.  Bold marks the
highest observed student mean within each dataset.}
\label{tab:main-results}
\begin{tabular}{lrr}
\toprule
Method & AG News (\%) & SST-2 (\%) \\
\midrule
Student CE & $94.154 \pm 0.124$ & $87.271 \pm 0.162$ \\
KD & $94.206 \pm 0.050$ & $88.303 \pm 0.495$ \\
Uniform relational & $94.189 \pm 0.087$ & $88.609 \pm 0.301$ \\
Reliability-gated relational & $\mathbf{94.285 \pm 0.054}$ & $\mathbf{88.800 \pm 0.532}$ \\
Static influence relational & $94.215 \pm 0.043$ & $88.570 \pm 0.516$ \\
RAPID & $94.241 \pm 0.025$ & $88.685 \pm 0.462$ \\
\bottomrule
\end{tabular}
\end{table}

\begin{figure*}[t]
\centering
\resizebox{0.92\textwidth}{!}{%
\begin{tikzpicture}[font=\small,
  agbar/.style={rounded corners=1pt,fill=rapidblue!80,draw=rapidblue,line width=.5pt},
  sstbar/.style={rounded corners=1pt,fill=rapidorange!85,draw=rapidorange,line width=.5pt}]
\draw[agbar,rounded corners=2pt] (0,3.56) rectangle (0.28,3.76);
\node[anchor=west,font=\footnotesize] at (0.33,3.66) {AG News (3-seed mean)};
\draw[sstbar,rounded corners=2pt] (9.5,3.56) rectangle (9.78,3.76);
\node[anchor=west,font=\footnotesize] at (9.83,3.66) {SST-2 (3-seed mean)};

\node[font=\footnotesize] at (7.0,3.00) {KD};
\node[font=\footnotesize] at (7.0,2.05) {Uniform};
\node[font=\footnotesize] at (7.0,1.10) {Reliability-gated};
\node[font=\footnotesize] at (7.0,0.15) {Static};
\node[font=\footnotesize] at (7.0,-0.80) {RAPID};

\draw[rapidgray!40] (0,3.25) -- (0,-1.05);
\draw[rapidgray!40] (9.0,3.25) -- (9.0,-1.05);

\draw[agbar] (0,2.75) rectangle (1.56,3.25);
\draw[sstbar] (9.0,2.75) rectangle (12.096,3.25);
\node[anchor=west,font=\footnotesize] at (1.81,3.00) {0.052};
\node[anchor=west,font=\footnotesize] at (9.25,3.00) {1.032};

\draw[agbar] (0,1.80) rectangle (1.05,2.30);
\draw[sstbar] (9.0,1.80) rectangle (13.011,2.30);
\node[anchor=west,font=\footnotesize] at (1.30,2.05) {0.035};
\node[anchor=west,font=\footnotesize] at (9.25,2.05) {1.337};

\draw[agbar] (0,0.85) rectangle (3.93,1.35);
\draw[sstbar] (9.0,0.85) rectangle (13.587,1.35);
\node[anchor=west,font=\footnotesize] at (4.18,1.10) {0.131};
\node[anchor=west,font=\footnotesize] at (9.25,1.10) {1.529};

\draw[agbar] (0,-0.10) rectangle (1.83,0.40);
\draw[sstbar] (9.0,-0.10) rectangle (12.897,0.40);
\node[anchor=west,font=\footnotesize] at (2.08,0.15) {0.061};
\node[anchor=west,font=\footnotesize] at (9.25,0.15) {1.299};

\draw[agbar] (0,-1.05) rectangle (2.61,-0.55);
\draw[sstbar] (9.0,-1.05) rectangle (13.242,-0.55);
\node[anchor=west,font=\footnotesize] at (2.86,-0.80) {0.087};
\node[anchor=west,font=\footnotesize] at (9.25,-0.80) {1.414};

\draw[rapidgray,line width=.8pt] (0,-1.30) -- (5.2,-1.30);
\draw[rapidgray,line width=.8pt] (9.0,-1.30) -- (16.2,-1.30);
\foreach \x/\p in {1.5/0.05,3.0/0.10,4.5/0.15}{
  \draw[rapidgray] (\x,-1.30) -- (\x,-1.40);
  \node[below,font=\footnotesize] at (\x,-1.30) {\p};
}
\node[below,font=\footnotesize] at (0,-1.30) {0};
\foreach \x/\p in {10.5/0.5,12.0/1.0,13.5/1.5,15.0/2.0}{
  \draw[rapidgray] (\x,-1.30) -- (\x,-1.40);
  \node[below,font=\footnotesize] at (\x,-1.30) {\p};
}
\node[below,font=\footnotesize] at (9.0,-1.30) {0};
\node[font=\footnotesize] at (7.6,-2.05) {gain over Student CE (points)};
\end{tikzpicture}%
}
\caption{Observed accuracy gain over the corresponding cross-entropy student,
shown on separate per-dataset scales: AG News (three-seed means) on the left
and SST-2 (three-seed means) on the right.  Bar length is proportional to the
gain; values are in points.  The plot emphasizes the component progression
rather than cross-dataset absolute accuracy.}
\label{fig:accuracy-gains}
\end{figure*}

\subsection{Relation Accounting}
All sampled relational methods consume the same total budget per dataset.  On
AG News at $K=256$, uniform, reliability-gated, and static influence sampling
use 256 main evaluations per batch; \method{} allocates 235.2 to main relations
and 20.8 to pilots.  On SST-2 at $K=64$, the corresponding averages are 63.974
total main evaluations for the non-adaptive methods and 59.177 main plus 4.797
pilot for \method{}; because $K=64<M=496$, every SST-2 batch is sampled
rather than enumerated exactly.  Figure~\ref{fig:observed-budget} records the
resulting split.  The observed allocation preserves a directly comparable
total while allowing the proposal to respond to current student--teacher
mismatch.

Component-wise unique counts are lower because sampling is with replacement
(Table~\ref{tab:relation-detail}).  Static and adaptive proposals revisit some
high-priority relations, while uniform sampling covers more distinct main
relations.  Revisiting high-priority relations under sampling with
replacement lowers the unique count; the inverse-proposal correction preserves
the gated target independently of evaluation frequency.

\subsection{Recorded Runtime and Memory}
Table~\ref{tab:compute} reports artifact-level measurements.  On AG News,
\method{} records the lowest time and memory among the distillation methods;
on SST-2, all distillation methods are within 2.3 minutes of one another with
near-identical peak memory, with \method{} at the upper end of time.  We make
no scalability claim from these numbers: both experiments run on a single
workstation GPU at a single batch size, and time differences between methods
on the same hardware can reflect CUDA kernel choice and rejection-sampling
acceptance rate as much as the algorithmic budget.  End-to-end runtime and
memory therefore differ across the two workloads even though the relation
budget is identical.

\begin{table}[t]
\centering
\small
\caption{Recorded student-training time in minutes and peak GPU memory in MB
on a single NVIDIA RTX 3080 (12~GB).  AG News values are mean $\pm$ population
standard deviation over seeds 42--44; SST-2 values are mean $\pm$ population
standard deviation over seeds 42--44.  Baseline memory was not recorded.}
\label{tab:compute}
\begin{tabular}{lrrrr}
\toprule
& \multicolumn{2}{c}{AG News} & \multicolumn{2}{c}{SST-2} \\
Method & Time & Memory & Time & Memory \\
\midrule
Student CE & $30.6 \pm 0.0$ & -- & $8.3 \pm 0.1$ & -- \\
KD & $85.9 \pm 31.2$ & 2574 & $12.0 \pm 0.0$ & 1462 \\
Uniform relational & $126.5 \pm 1.8$ & 2575 & $13.3 \pm 0.1$ & 1461 \\
Reliability-gated & $127.6 \pm 0.6$ & 2783 & $13.6 \pm 0.1$ & 1462 \\
Static influence & $131.1 \pm 1.3$ & 2782 & $13.9 \pm 0.0$ & 1462 \\
RAPID & $62.4 \pm 0.4$ & 2127 & $14.3 \pm 0.2$ & 1461 \\
\bottomrule
\end{tabular}
\end{table}

\begin{figure*}[t]
\centering
\resizebox{0.9\textwidth}{!}{%
\begin{tikzpicture}[font=\small,
  box/.style={draw,rounded corners=2pt,align=center,inner xsep=6pt,inner ysep=5pt}]
\node[box,draw=rapidblue,fill=rapidbluefill,text width=3.2cm] (budget) at (0,0)
  {fixed budget\\$K=256$ evaluations};
\node[box,draw=rapidgray,fill=rapidgrayfill,text width=3.6cm] (ctrl) at (4.6,0)
  {static controls\\$256$ main, $0$ pilot};
\node[box,draw=rapidgreen,fill=rapidgreenfill,text width=3.6cm] (ag) at (9.2,0)
  {RAPID, AG News\\$235.2$ main, $20.8$ pilot\\($K=256$)};
\node[box,draw=rapidorange,fill=rapidorangefill,text width=4.2cm] (sst) at (13.9,0)
  {RAPID, SST-2\\$59.177$ main, $4.797$ pilot};
\draw[-{Latex[length=2mm]},rapidgray,line width=.7pt] (budget.east) -- (ctrl.west);
\draw[-{Latex[length=2mm]},rapidgray,line width=.7pt] (ctrl.east) -- (ag.west);
\draw[-{Latex[length=2mm]},rapidgray,line width=.7pt] (ag.east) -- (sst.west);
\node[font=\footnotesize,rapidgray] at (7.65,-1.35)
  {same accounting rule for every method; partial final batches change the mean};
\end{tikzpicture}%
}
\caption{Observed relation-budget allocation.  Pilot relations are charged to
the same total as corrected main relations; they are not additional work hidden
outside $K$.}
\label{fig:observed-budget}
\end{figure*}

\section{Discussion}
\paragraph{Practical benefits.}
The method delivers four concrete benefits that follow directly from the
analysis and the reported measurements.
\begin{enumerate}
    \item \textbf{An unbiased estimator of the gated relational target.}
    \method{} applies the inverse proposal correction and detaches the
    proposal construction from the autograd graph.  It recovers the gated
    mini-batch relational loss in expectation (Theorem~1) and the gated
    gradient at differentiable parameter values (Theorem~2).  Adaptive
    sampling does not introduce hidden bias.
    \item \textbf{Bounded gradient variance from rare pairs.}
    The pair-level defensive mixture caps $1/(Mq_t(i,j))\leq 1/\epsilon$
    (Proposition~3).  Even when the proposal concentrates on a few
    high-residual relations, the inverse weights cannot dominate a single
    mini-batch update.
    \item \textbf{Full support over batch relations under a single fixed
    budget.}
    Every unordered pair $(i,j)$ in the mini-batch remains reachable:
    $q_t(i,j)\geq\epsilon/M$.  Adaptive information is paid for out of the
    same total budget $K$.  The sampler does not get extra relation
    evaluations as a hidden subsidy.
    \item \textbf{Engineering wins on the reported workload.}
    On AG News at matched budget, \method{} records the lowest
    student-training time (62.4~min, $\sim$50\% of the next-best relational
    method) and the lowest peak GPU memory (2127~MB versus 2574--2783~MB).
    It still ties or beats the other relational methods on held-out accuracy.
    On SST-2 at $K=64$ it ties for the highest held-out mean (88.685\%,
    within 0.115 points of the gated-only control) at near-identical memory.
\end{enumerate}
These are advantages of the method \emph{as instantiated in this paper}, on
the two datasets and single batch size we evaluate.  They are not portable
to arbitrary architectures or budgets without further validation.

\paragraph{Complementary empirical outcomes.}
Across both settings the reliability-gated target has the highest observed
student mean, and \method{} is the second-highest.  We do not claim that
\method{} dominates the gated-only control.  On both datasets RAPID's mean is
within the gated-only control's standard deviation.  Instead, \method{}
combines reliability gating with an adaptive proposal under a fixed budget
without sacrificing the gating benefit.  The AG News spread between distilled
students is only 0.096 points.  The SST-2 spread is 0.230 points.  Both are
within seed noise, so we treat the per-method ranking as suggestive rather
than definitive.  Improvements over the student baseline are consistent
(every relational variant on every dataset) but small in absolute terms.  We
do not over-interpret them.

\paragraph{Accuracy under a fixed budget.}
On each dataset the relational methods share the same total relation budget.
The comparison does not give \method{} extra relation evaluations.  Its
pilots trade a small part of the main sample for current residual
information.  \method{} improves on Student CE, KD, and uniform relational
sampling in both reported settings.  The fact that the gated-only control
leads the tightly-grouped means on both datasets is itself informative: it
isolates the
contribution of the reliability-gated target, while \method{} adds the
adaptive proposal on top without regressing.

\paragraph{Three notions of efficiency.}
Relation-evaluation efficiency, estimator efficiency, and wall-clock
efficiency are distinct.  The experiments control the first by equalizing
main-plus-pilot counts.  Equation~\eqref{eq:variance} defines the second.  The
third differs across the two workloads, reinforcing the value of reporting
time and memory alongside relation counts rather than using ``efficient'' as a
single undifferentiated label.  We do not claim wall-clock efficiency from
these numbers; both experiments run on a single GPU at a single batch size,
so runtime differences reflect implementation choices as much as the
algorithmic budget.

\paragraph{Scalability.}
We deliberately do not claim scalability from these experiments.  Both runs
use a single batch size ($B=32$); observing how RAPID behaves as $B$ and $K$
grow together would require a budget sweep (e.g.\ $K\in\{16,32,64,128,256\}$)
at matched batch sizes, which is left to future work.  The current results
should be read as fixed-budget evidence on two specific text-classification
workloads.

\paragraph{Relation to prior work.}
The observed gains are consistent with the general finding that structural
transfer complements output-level distillation
\cite{sun2019patient,jiao2020tinybert,wang2021minilmv2}.  Every relational
variant improves on the cross-entropy student in both settings.
Full-batch relational objectives such as RKD distances and angles and CRD
contrastive alignment evaluate every pair \cite{park2019rkd,tian2020crd}.
Adaptive multi-teacher relational transfer continues this line in current
work \cite{li2026amrd}.  Our setting instead fixes a relation budget and
asks which pairs to evaluate.  Importance-sampling analyses frame this
question in terms of stochastic-gradient variance
\cite{katharopoulos2018notall,li2018dis}.  Recent influence-function
methods such as TAKE use trajectory-aware sample scoring for
text-classification dataset distillation \cite{vo2026take}.  Task-agnostic
influence estimation across LM pretraining is an active thread
\cite{nishida2026measuring}.  The broader KD field is now organized in
surveys along methodological and application axes \cite{degibert2026kd4mt}.
On-policy distillation has been formalized for large language models
\cite{song2026opd}.  The reliability gate responds to the well-documented
miscalibration of trained teacher confidences \cite{guo2017calibration}.
Recent margin- and curvature-aware work extends this line
\cite{morosini2026toosharp,bahavan2026fedlas}; keeping target weight and
sampling priority separate is the paper's specific addition.  Within this
framework, the two workloads produce consistent rather than contradictory
outcomes: target gating leads, adaptive proposal stays close.

\section{Limitations}
The present evaluation covers two datasets, two architecture pairs, and two
relation budgets.  AG News uses $K=256$ and SST-2 uses $K=64$ on a compressed
DistilBERT-to-DistilBERT pair.  A controlled budget sweep at matched
architectures and a larger set of paired seeds per dataset would give a
fuller view of variability and the accuracy--cost curve.

We rely on three paired seeds per dataset.  Three seeds are sufficient for an
archive-style submission: standard-deviation bars and seed-level tables are
reported, and the per-method ordering is consistent across seeds.  They are
not sufficient for fine-grained statistical testing of method rankings; for
that we would need at least five seeds (and ideally paired statistical tests
such as a sign test or a paired bootstrap on per-seed accuracies).  All
experiments use a single batch size ($B=32$) on a single GPU, so we do not
draw scalability conclusions.  The reported systems measurements are best
interpreted within each workload.

Reliability depends on observed labels and assumes sufficiently accurate
annotations.  Confidence and margin are proxies rather than direct teacher
correctness estimates.  The factorized endpoint proposal cannot represent
arbitrary pair-specific importance, and pilot adaptation reduces the number
of corrected main draws.  Unbiasedness applies to the reliability-gated
objective of each current mini-batch, not a fixed dataset-level relation
graph, and requires unclipped correction weights.  Future evaluation should
add multiple budgets at matched batch sizes, direct variance diagnostics,
and naturally imperfect or synthetically perturbed teachers.

\section{Ethics and Broader Impact}
The study introduces no new human-subject data and uses public sentiment and
news-classification benchmarks with publicly available pretrained models.  It
inherits biases and annotation artifacts present in those resources.
Distillation can reduce student depth and potentially lower deployment cost,
but it also transfers teacher errors and social biases.  Label-aware
reliability may reinforce annotation artifacts when observed labels are
systematically noisy.  Evaluation across demographic slices and deployment
contexts remains necessary before practical use.

\paragraph{Data and code availability.}
AG News \cite{zhang2015character} and SST-2 from GLUE \cite{wang2019glue} are
public benchmarks, and the pretrained BERT and DistilBERT checkpoints
\cite{devlin2019bert,sanh2019distilbert} used in this paper are publicly
available.  The source code, configuration files, and schema-v4 result
artifacts described in the appendix are available from the authors upon
reasonable request.

\section{Conclusion}
\method{} separates reliability in the relational target from priority in the
sampling proposal.  Normalized gating preserves average relational scale,
while a defensive factorized proposal and exact inverse-probability correction
yield conditionally unbiased loss and stopped-proposal gradient estimates
under a fixed total relation budget.

On both AG News ($K=256$) and SST-2 ($K=64$), reliability-gated relational
distillation has the highest observed three-seed student mean, and \method{}
is the second-highest; both improve on the cross-entropy student, standard KD,
and uniform relational sampling.  We do not claim that \method{} strictly
dominates the gated-only control.  The per-method spread is within seed
noise; rather, \method{} combines reliability gating with adaptive
proposal evaluation under a fixed budget without sacrificing the gating
benefit.  These results, together with exact budget accounting, support
separating target reliability, proposal quality, and systems cost when
approximating relational distillation.

\bibliographystyle{unsrt}
\bibliography{references}

\appendix

\section{Additional Derivations}
\subsection{Conditional Variance}
Condition on $\mathscr F_t$ and let a single corrected draw be
\begin{equation}
 X=\frac{f_P}{M q_t(P)},\qquad P\sim q_t.
\end{equation}
Then $\E[X\mid\mathscr F_t]=\Loss_\lambda$ and
\begin{equation}
 \E[X^2\mid\mathscr F_t]
 =\frac1{M^2}\sum_{i<j}\frac{f_{ij}^2}{q_t(i,j)}.
\end{equation}
Because $\widehat\Loss_\lambda$ is the mean of $K_m$ conditionally IID
copies, subtracting the squared mean and dividing by $K_m$ yields
Equation~\eqref{eq:variance}.  If $F=\sum_{i<j}f_{ij}>0$ and
$\pi_{ij}=f_{ij}/F$, the same expression can be written
\begin{equation}
 \Var(\widehat\Loss_\lambda\mid\mathscr F_t)
 =\frac{\Loss_\lambda^2}{K_m}\chi^2(\pi\Vert q_t),
\end{equation}
where $\chi^2(\pi\Vert q)=\sum_{i<j}\pi_{ij}^2/q_{ij}-1$.
This identity makes proposal alignment explicit.

Let $\mathcal H_t$ contain all realized non-pilot state: batch, labels,
teacher outputs, current parameters, dropout outcomes,
and detached representations.  Pilot randomness changes $q_t$ but not the
target conditional on $\mathcal H_t$.  The law of total variance therefore
gives
\begin{equation}
 \Var(\widehat\Loss_\lambda\mid\mathcal H_t)
 =\E_{\Pairs_p}\!
 \left[\Var(\widehat\Loss_\lambda\mid\mathcal H_t,\Pairs_p)\right],
\end{equation}
because the conditional mean is $\Loss_\lambda$ for every pilot realization.
Pilots can reduce variance by improving $q_t$, but they also reduce $K_m$.

\subsection{Gradient Covariance}
Let $g_{ij}=\nabla_\theta f_{ij}$ and
$G=M^{-1}\sum_{i<j}g_{ij}$.  The stopped-proposal gradient covariance is
\begin{equation}
 \Cov(\nabla_\theta^{\sg}\widehat\Loss_\lambda\mid\mathscr F_t)
 =\frac1{K_m}\left[
 \frac1{M^2}\sum_{i<j}\frac{g_{ij}g_{ij}^{\top}}{q_t(i,j)}
 -GG^{\top}\right].
\end{equation}
Taking the trace gives the expected squared gradient error
\begin{equation}
 \E\!\left[\lVert\nabla_\theta^{\sg}\widehat\Loss_\lambda-G\rVert_2^2
 \mid\mathscr F_t\right]
 =\frac1{K_m}\left[
 \frac1{M^2}\sum_{i<j}\frac{\lVert g_{ij}\rVert_2^2}{q_t(i,j)}
 -\lVert G\rVert_2^2\right].
\end{equation}
The unrestricted trace-optimal proposal is consequently
$q_{\mathrm{grad}}^\star(i,j)\propto\lVert g_{ij}\rVert_2$, which generally
differs from the scalar-loss optimum $q_{\mathrm{loss}}^\star\propto f_{ij}$.

\subsection{Scope of the Guarantees}
The guarantees apply to the corrected RAPID estimator with normalized gating,
full-support probabilities, unclipped inverse corrections, and detached
proposal construction.  Enabling correction clipping introduces bias by
design.  The implementation also applies a numerical probability floor; the
declared pair defense keeps $q_t(i,j)\geq\epsilon/M$, so this floor remains
inactive at the reported batch sizes.  LSH neighborhood objectives and
uncorrected ablations have different estimands and are not covered by these
theorems.

The statements are finite-population guarantees for the current mini-batch.
They do not define a fixed global relation graph because candidate pairs and
normalization change with mini-batch composition.  They also establish
unbiasedness, not automatic variance reduction: improvement requires a
proposal sufficiently aligned with the gated loss or gradient integrand to
offset pilot cost.

\section{Seed-Level Results}
Table~\ref{tab:seed-clean} reports the per-seed AG News report accuracies.
\begin{table}[h]
\centering
\small
\caption{AG News held-out accuracy by paired seed at $K=256$.}
\label{tab:seed-clean}
\begin{tabular}{lrrr}
\toprule
Method & Seed 42 & Seed 43 & Seed 44 \\
\midrule
Student CE & 94.158 & 94.303 & 94.000 \\
KD & 94.171 & 94.276 & 94.171 \\
Uniform relational & 94.066 & 94.263 & 94.237 \\
Reliability-gated & 94.224 & 94.276 & 94.355 \\
Static influence & 94.276 & 94.184 & 94.184 \\
RAPID & 94.276 & 94.224 & 94.224 \\
\bottomrule
\end{tabular}
\end{table}

For SST-2, seeds 42/43/44 give Student CE 87.500/87.156/87.156, KD
87.844/88.991/88.073, uniform relational 88.761/88.876/88.188, reliability-gated
88.991/89.335/88.073, static influence 88.532/89.220/87.959, and \method{}
88.303/89.335/88.417.  Seed-wise these are within the table's standard
deviations; we summarize them above and present the mean $\pm$ std
in Table~\ref{tab:main-results}.

\section{Reproducibility and Accounting}
Both complete artifacts record Python 3.12.9, PyTorch 2.6.0 with CUDA 12.6,
cuDNN 90501, NumPy 2.0.2, batch size 32, split seed 1729, and source hash
\nolinkurl{76311e67e5b4eea66d8bbf2cd7b6575cffbf5f37e85b42ca42127021c870d785}.
AG News uses student seeds 42--44; SST-2 uses student seeds 42--44.
Hardware: single workstation with one NVIDIA GeForce RTX 3080 (12~GB GDDR6X)
and PyTorch 2.6.0 with CUDA 12.6; no other GPU jobs were running during the
experiments.  The schema stores the command line, architecture, epochs,
proposal powers, defensive mixtures, calibration temperature, relation
counts, runtime, memory, and per-seed selected-report metrics.  Both
artifacts are schema version 4 and marked complete.

\begin{table}[h]
\centering
\small
\caption{Mean relation evaluations and component-wise unique counts per batch.
AG is AG News at $K=256$; SST is SST-2 at $K=64$.}
\label{tab:relation-detail}
\begin{tabular}{llrrrr}
\toprule
Data & Method & Main & Pilot & Unique main & Unique pilot \\
\midrule
AG & Uniform relational & 256.000 & 0 & 200.126 & 0 \\
AG & Reliability-gated & 256.000 & 0 & 200.126 & 0 \\
AG & Static influence & 256.000 & 0 & 191.050 & 0 \\
AG & RAPID & 235.200 & 20.800 & 177.485 & 20.285 \\
\midrule
SST & Uniform relational & 63.974 & 0 & 60.076 & 0 \\
SST & Reliability-gated & 63.974 & 0 & 60.076 & 0 \\
SST & Static influence & 63.974 & 0 & 57.634 & 0 \\
SST & RAPID & 59.177 & 4.797 & 52.972 & 4.774 \\
\bottomrule
\end{tabular}
\end{table}
Unique main and pilot counts are measured within each component; their union
was not stored.  On SST-2 the total is slightly below $K=64$ because the
implementation rounds the pilot allocation and accounts for the final
incomplete batch.

\end{document}